\documentclass[letterpaper, 10 pt, journal, twoside]{ieeetran}
\usepackage{xcolor}
\usepackage{cite}
 
\usepackage{amsmath,amsfonts}
\usepackage{amssymb}
\usepackage{mathtools}
\usepackage{dsfont}

\usepackage{textcomp}
\usepackage{stfloats}
\usepackage{url}
\usepackage{siunitx}
\usepackage{amsthm}
\usepackage{verbatim}

\usepackage{graphicx}
\usepackage[caption=false,font=normalsize,labelfont=sf,textfont=sf]{subfig}
\usepackage{makecell}
\usepackage{booktabs}
\usepackage{array}

\usepackage{algorithm}
\usepackage[noend]{algpseudocode}

\usepackage{balance}

\usepackage{mysymbol}
\usepackage{hyperref}
\hyphenation{op-tical net-works semi-conduc-tor IEEE-Xplore}
\def\BibTeX{{\rm B\kern-.05em{\sc i\kern-.025em b}\kern-.08em
    T\kern-.1667em\lower.7ex\hbox{E}\kern-.125emX}}
\usepackage{balance}

\newtheorem{theorem}{Theorem}[section]

\newtheorem{assumption}[theorem]{Assumption}
\newtheorem{remark}[theorem]{Remark}
\newtheorem{problem}[theorem]{Problem}
\newtheorem{proposition}[theorem]{Proposition}

\newcommand{\R}{{\mathbb{R}}}

\begin{document}
\title{Safe Planning in Unknown Environments \\ Using Conformalized Semantic Maps}
\author{David Smith Sundarsingh$^1$, Yifei Li$^1$, Tianji Tang$^2$, George J. Pappas$^3$, Nikolay Atanasov$^2$ and Yiannis Kantaros$^1$
\thanks{Manuscript created September 29, 2025; Revised December 28, 2025; Accepted February 3, 2026. This paper was recommended for publication by Editor Lucia Pallottino upon evaluation of the Associate Editor and Reviewers' comments. This work was supported by
the ARL grant DCIST CRA W911NF-17-2-0181 and NSF CAREER award CNS \#2340417.}\thanks{$^1$The authors are with the Department of Electrical and Systems Engineering, Washington University in St. Louis, MO. 63108, USA. Emails: \{d.s.sundarsingh,liyifei,ioannisk\}@wustl.edu}.
\thanks{$^2$The authors are with the Department of Electrical and Computer Engineering, University of California, San Diego, CA. 92093, USA. Emails: \{tit006,natanasov\}@ucsd.edu}.
\thanks{$^3$The author is with the Department of Electrical and Systems Engineering, University of Pennsylvania, Philadelphia, PA. 19104, USA. Email: pappasg@seas.upenn.edu.}}

\markboth{IEEE ROBOTICS AND AUTOMATION LETTERS. PREPRINT VERSION. ACCEPTED FEBRUARY, 2026}%
{Sundarsingh \MakeLowercase{et al.}: Conformal Safe Semantic Planning}

\maketitle
\begin{abstract}
    This paper addresses semantic planning problems in unknown environments under perceptual uncertainty. The environment contains multiple unknown semantically labeled regions or objects, and the robot must reach desired locations while maintaining class-dependent distances from them. We aim to compute robot paths that complete such semantic reach-avoid tasks with user-defined probability despite uncertain perception. Existing planning algorithms either ignore perceptual uncertainty—thus lacking correctness guarantees—or assume known sensor models and noise characteristics. In contrast, we present the first planner for semantic reach-avoid tasks that achieves user-specified mission completion rates without requiring any knowledge of sensor models or noise. This is enabled by quantifying uncertainty in semantic maps—constructed on-the-fly from perceptual measurements—using conformal prediction in a model- and distribution-free manner. We validate our approach and the theoretical mission completion rates through extensive experiments, showing that it consistently outperforms baselines in mission success rates.
\end{abstract}
\begin{IEEEkeywords}
Planning under Uncertainty, Reactive and Sensor-Based Planning, AI-Based Methods.
\end{IEEEkeywords}

\section{Introduction}\label{sec:intro}

\IEEEPARstart{E}{xtensive} research on autonomous robot navigation has resulted in numerous planners capable of designing trajectories that reach known goal regions while avoiding obstacles \cite{a-star,karaman2011sampling}.
Recent advances in AI-enabled robot vision and semantic mapping have created new opportunities to move beyond these well-studied geometric planning approaches toward semantic planning, where task completion requires reasoning not only about the geometric structure of the environment (e.g., obstacle locations) but also its semantic structure (e.g., object categories or terrain types) \cite{gionfrida2024computer,placed2023survey}.

This paper focuses on semantic planning problems in environments populated with multiple objects and regions whose locations and semantic categories are initially unknown. The robot’s goal is to reach desired destinations while maintaining category-dependent safety distances from these semantic regions and objects. Recent planning algorithms tackle such tasks by leveraging AI-enabled perception systems (e.g., object recognition algorithms) to collect geometric and semantic observations 
\cite{kantaros2022perception,fu2016optimal,vasilopoulos2020reactive,malczyk2025semantically,pal2021learning,hierarchical3DSG}. These approaches typically either (i) employ deep learning methods to directly map perceptual inputs to robot actions (e.g., \cite{georgakis2022uncertainty,malczyk2025semantically,hierarchical3DSG}) or (ii) leverage semantic SLAM techniques  
\cite{bowman2017probabilistic,ssmi_temp,tian2022kimera} to build semantic maps that are used by downstream planners (e.g., \cite{kantaros2022perception,fu2016optimal,psomiadis2025hcoa}). Despite impressive empirical performance, in these architectures the planning layer generally treats outputs from the perception or mapping modules as ground truth, ignoring their inherent uncertainty. As a result, these approaches lack mission completion guarantees. 
%
Attempts to model perceptual uncertainty within semantic and geometric planning have been investigated in \cite{kantaros2022perception,fu2016optimal,latentBKI} and \cite{renganathan2023risk,bry2011rapidly,palmieri2017kinodynamic,ho2022gaussian}, respectively. However, these methods often depend on unreliable proxies such as SLAM-derived confidence scores that do not represent true probabilities \cite{minderer2021revisiting}, or assume known sensor models and Gaussian noise—assumptions that may not hold in practice. 

\begin{figure}[t]
    \centering
    \vspace{-0.5cm}
\includegraphics[width=\linewidth]{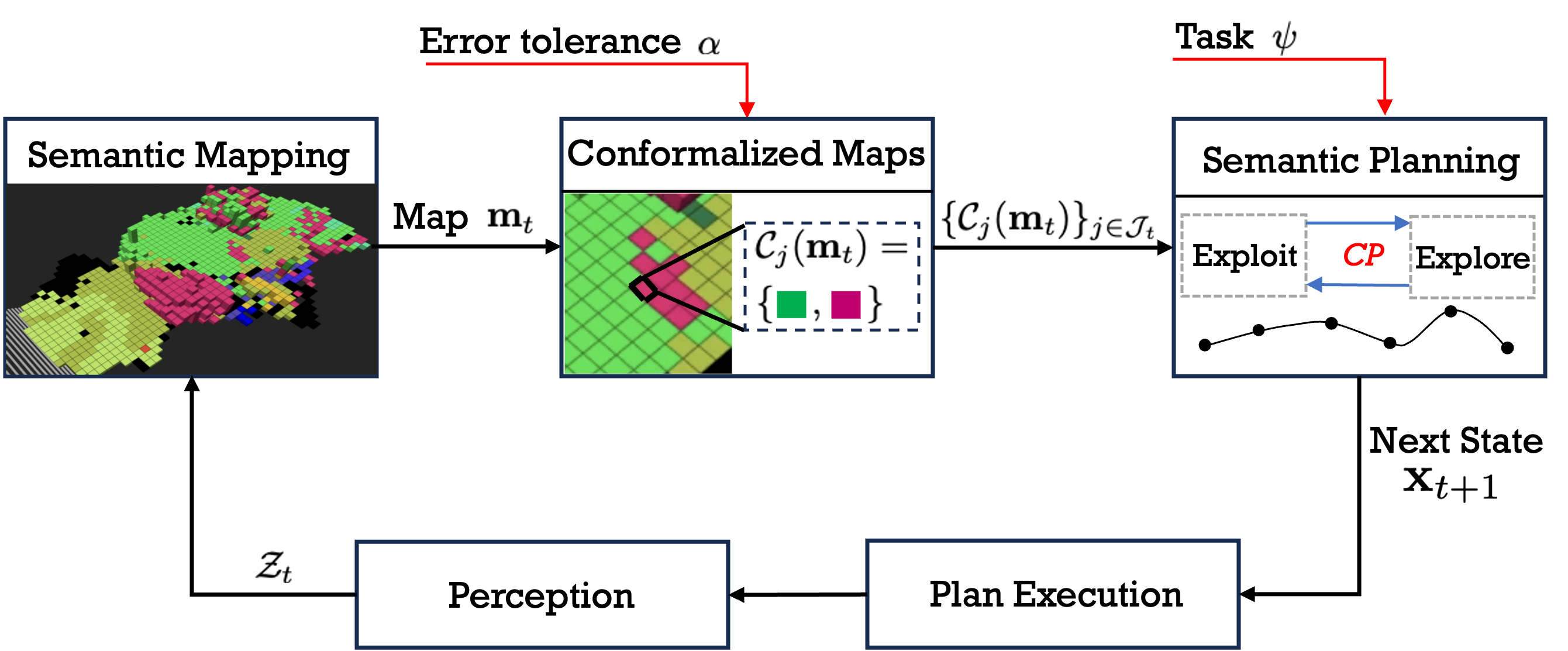}
    \caption{Illustration of the proposed planning framework for semantic reach-avoid tasks $\psi$. Given categorical and range measurements, a semantic map is constructed \cite{ssmi_temp}. To account for map uncertainty caused by perceptual imperfections, conformal prediction is employed to generate sets of maps that contain the ground-truth environment with a user-defined probability. These sets are then used by a planner to generate uncertainty-aware paths that complete the assigned tasks with a user-defined probability $1-\alpha$.}
    \vspace{-0.5cm}
    \label{fig:overview}
\end{figure}

This paper addresses these limitations by developing a semantic planner for unknown environments that completes semantic reach-avoid tasks with a user-defined probability under perceptual uncertainty, while remaining agnostic to sensor models and noise characteristics. 
Robots are equipped with imperfect perception systems (e.g., cameras with semantic segmentation) that provide geometric and semantic observations. Our framework consists of three main components; see Fig. \ref{fig:overview}.
First, we leverage existing semantic mapping algorithms that capture dense semantic and geometric environmental information and generate a multi-class semantic map of the environment \cite{ssmi_temp}.
Second, we quantify uncertainty of the generated semantic maps using Conformal Prediction (CP), a statistical, model- and distribution-free tool \cite{CPbase}. This produces a set of maps guaranteed to contain the ground truth with a user-defined probability. 
Third, we develop a semantic planner that balances exploitation and exploration. In exploitation mode, the planner selects the most conservative map from the CP set and generates paths using any search- or sampling-based planner \cite{a-star,karaman2011sampling}. 
When uncertainty is too high for a feasible solution to exist, the planner switches to exploration mode to reduce uncertainty (modeled by the size of the CP set) until a feasible path can be found. 
We show both theoretically and empirically that our method achieves desired mission completion rates. We also conduct experiments highlighting the exploitation–exploration trade-offs as the required success probability increases. Our comparative experiments also demonstrate that our approach outperforms baselines that either treat mapping outputs as ground truth or rely on uncertainty metrics provided by the mapping algorithm.

\textbf{Related Work:} \textit{(i) Perception-based Planning:} Numerous planners for unknown environments have been proposed. However, these either neglect perceptual/mapping uncertainty and, thus, lack correctness guarantees 
\cite{ray2024task,vasilopoulos2020reactive,malczyk2025semantically,georgakis2022uncertainty,pal2021learning,hierarchical3DSG}, or assume known sensor models and noise characteristics to design probabilistically correct paths \cite{kantaros2022perception,fu2016optimal,renganathan2023risk,bry2011rapidly,palmieri2017kinodynamic,ho2022gaussian}. In contrast, our planner achieves user-specified mission completion rates without assuming any knowledge of sensor models or noise. The closest work to ours is \cite{PwC}, which employs CP to quantify the uncertainty of object detectors and design probabilistically correct paths. However, \cite{PwC} focuses solely on geometric planning and cannot straightforwardly handle combined geometric and semantic uncertainty.  \emph{(ii) CP for Verified Autonomy:} {CP has been applied to quantify uncertainty in various AI-enabled components (e.g., perception, motion prediction, and LLMs) \cite{lindemann2023safe,sheng2024safe,cpPerp1,ren2023robots,wang2024probabilistically,lekeufack2024conformal,sun2023conformal}.}
%
%
Inspired by these works, we propose the first use of CP to quantify semantic map uncertainty enabling probabilistically correct semantic planning.


\textbf{Contributions}: 
First, we introduce the first semantic planner for unknown environments that achieves user-defined mission completion rates under perceptual uncertainty, while remaining agnostic to sensor models and noise characteristics. Second, we present the first application of CP to quantify the uncertainty of semantic maps generated by existing mapping algorithms, also in a model-agnostic manner. Third, we present experiments that validate the theoretical mission completion guarantees and show that our method consistently outperforms baselines in terms of mission success rates. 
\section{Problem Statement}\label{sec:problem}
\subsection{Robot \& Environment Modeling}\label{sec:sysAndEnv}


Consider a robot with configuration $\bbx_t \in \ccalX$ at discrete time $t \geq 0$. We assume that the configuration $\bbx_t$ is known (e.g., from encoders and localization). The robot operates in an environment $\Omega\subseteq\R^d$, $d\in\{2,3\}$, which is represented as a collection of disjoint sets $\ccalL_k\subseteq\Omega$, each associated with a semantic category $k\in\mathcal{K}=\{0,1,\dots,K\}$ (e.g., cars, walls, trees). Here, $\ccalL_0$ denotes free space, and $\ccalL_k$ are fixed over time and unknown to the robot.

\subsection{Reachability Task with Semantic Safety Constraints}\label{sec:safetySpec}
The robot is tasked with reaching a known set of goal states $\ccalX_g\subseteq\ccalX$ while maintaining class-dependent safe distances $d_k$ from regions $\ccalL_k$. We denote the assigned task by $\psi$. As an example, consider a safety specification requiring the robot to maintain a distance of $1\text{m}$ from $\ccalL_\text{tree}$ and $3\text{m}$ from  $\ccalL_\text{car}$ while navigating towards a desired destination. Such semantic reach-avoid tasks can be completed by following a finite-horizon path $\bbx_{0:H}$, defined as a sequence of robot states, i.e., $\bbx_{0:H}=\bbx_0,\dots,\bbx_t,\dots,\bbx_H$, for some horizon $H$, where (i) $\bbx_H\in\ccalX_g$, and (ii) all $\bbx_t$ satisfy the safety constraints, i.e.,
\begin{align}\label{equ:pathCorrectness}
    \min_{\bbq\in \ccalL_k}\lVert\bbx_t|_{\Omega}-\bbq\rVert_2\geq d_k,\quad \forall t\in\{0,\dots,H\},
\end{align}{for all $\ccalL_k\subseteq\Omega$, where $k\in\mathcal{K}\setminus\{0\}$.}
In \eqref{equ:pathCorrectness}, $\bbx_t|_{\Omega}$ denotes the robot's position, i.e., the projection of the robot state onto $\Omega$. Completion of $\psi$ using a robot path $\bbx_{0:H}$ is denoted by $\bbx_{0:H}\models\psi$, where $\models$ denotes the satisfaction relation.
 
\subsection{Perception System and Semantic Mapping}\label{sec:mappingAlgo}


\begin{figure}[t]
    \centering
\includegraphics[width=\linewidth]{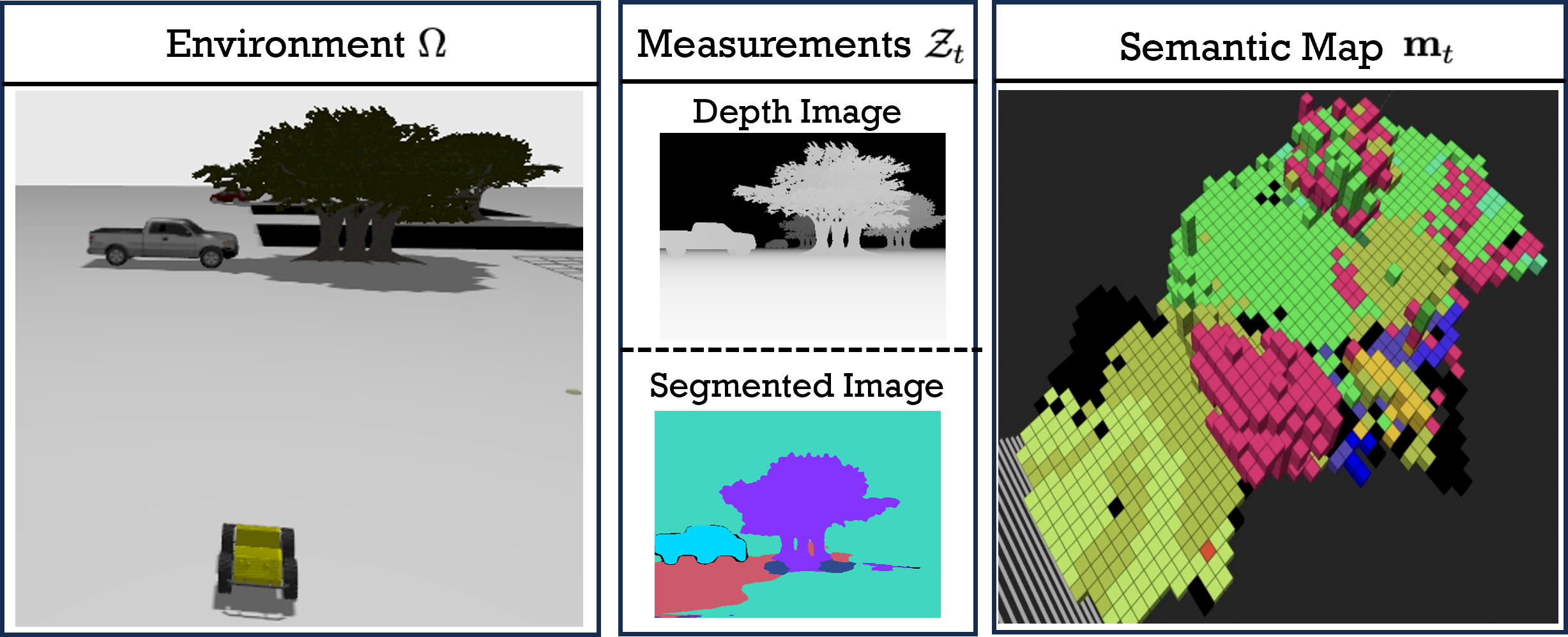}
\vspace{-0.5cm}
    \caption{The mapping algorithm uses depth and categorical measurements $\ccalZ_t$, provided as a depth image and a segmented image, respectively, to generate a 3D semantic grid map $\bbm_t$. Each grid cell is associated with a semantic category, and its color corresponds to that category.}
    \vspace{-0.5cm}
    \label{fig:envmapmeas}
\end{figure}
The robot is equipped with a sensor of range $r_{\text{max}}>0$ that provides information about the distances to, and semantic categories of, surrounding objects along a set of rays. We denote by $\mathcal{Z}_t:=\{z_{t,b}\}_b$ the set of measurements collected along ray $b$ at time $t$, where each measurement is given by $z_{t,b}=\{r_{t,b},y_{t,b}\}$, with $r_{t,b}\in\mathbb{dR}_{\geq0}$ and $y_{t,b}\in\ccalK$ denoting the range and categorical measurements, respectively. 

These labeled range measurements, {obtained by processing RGB-D images,} can be fused online to build a semantic representation of $\Omega$ using semantic mapping algorithms. We adopt \cite{ssmi_temp}, which models the environment as a multi-class grid map $\bbm$ defined as a collection of cells $j\in\ccalJ=\{1,\dots,M\}$, each labeled with a category $m_j\in\ccalK$ and represented by its center $c_j\in\Omega$; see Fig. \ref{fig:envmapmeas}. 
%
To account for perceptual noise, each cell $m_j$ maintains a probability mass function (PMF)
%
$p_t(m_j)=p(m_j|\bbx_{0:t},\ccalZ_{0:t})$ over $\mathcal{K}$, conditioned on the measurements $\ccalZ_{0:t}$ collected along the robot path $\bbx_{0:t}$. If at some $t$, no measurement corresponds to cell $j$, $p_t(m_j)=p_{t-1}(m_j)$. 
Accordingly, the distribution of map is defined as $p_{t}(\bbm)=\prod_{j\in\ccalJ}p_{t}(m_j)$. For brevity, we denote by $\sigma$ a mapping algorithm that takes as input the distributions $p_{t-1}(m_j)$ and the new measurement $\ccalZ_{t}$ collected at $\bbx_{t}$, and outputs the updated distributions $p_{t}(m_j),\forall j\in\ccalJ_t$, i.e., $p_{t}(\bbm)=\sigma(\Omega,p_{t-1}(\bbm),\ccalZ_{t})$, {where $\ccalJ_t\subseteq\ccalJ$ is the set of indices of the grid cells that have been detected till time $t$}; map update at $t=0$ occurs using a prior $p_{-1}(\bbm)$.  
Hereafter, when it is clear from the context, we refer to the random variable $\bbm_t$ without explicitly specifying its density $p_t(\bbm)$.

\subsection{Problem Statement}
Consider any user-defined initial robot state $\bbx_0$ and semantic reach-avoid task $\psi$ for an environment $\Omega$. We assume that $\Omega$ is sampled from an unknown distribution $\ccalD$, from which a finite set of i.i.d environments can be sampled.
The robot does not know $\Omega$; instead it has access to a prior map $\bbm_{-1}$ that can be updated online using the mapping algorithm $\sigma$. We aim to design a planning algorithm that, given an environment sampled from $\ccalD$, can generate a robot path $\bbx_{0:H}$, for some $H>0$, that completes $\psi$ with a user-specified probability $1-\alpha$, given the sequence of maps $\bbm_{0:H}$ updated online. Formally, we address the following problem: 


\begin{problem}\label{prob:main}
{Consider an unknown environment $\Omega \sim \ccalD$ with prior information $\bbm_{-1}$, and any initial robot state $\bbx_0$ and semantic reach-avoid task $\psi$ defined over $\Omega$.} Given a mapping algorithm $\sigma$, design a planning algorithm that generates a robot path $\bbx_{0:H}$, for some $H>0$, ensuring that 
\begin{equation}\label{eq:pr}
    \mathbb{P}_{{\Omega\sim \ccalD}}(\bbx_{0:H}\models\psi~|~\bbm_{0:H})\geq 1-\alpha,
\end{equation}
where $1-\alpha\in(0,1)$ is a user-specified success rate.
\end{problem}

We note that the probabilistic task completion statement in \eqref{eq:pr} accounts for errors at the mapping level, but not necessarily at the sensor level. For instance, certain sensor rays $b$ may fail to return a measurement from an object occupying cell $j$ (e.g., due to malfunctions). In such cases, the PMF of $m_j$ is never updated, creating the false impression that it lies in free space. The guarantee in \eqref{eq:pr} does not cover these sensor-level errors; rather, it addresses labeling and mapping errors in cells $m_j$ for which at least one associated measurement has been collected. 
Failures can also occur due to objects that (i) are close enough to the robot, leading to violations of \eqref{equ:pathCorrectness} but they (ii) lie outside the sensor’s field of view. To exclude such sensor-based errors, we make the following assumption.

\begin{assumption}[Sensors]\label{assume:sensors}
(i) The robot is equipped with an \textit{omnidirectional} sensor of range $r_{\text{max}}$ generating $\mathcal{Z}_t$. (ii) The sensor range $r_{\text{max}}$ satisfies $r_{\text{max}}\geq d_k$ for all $k\in\ccalK$ 
(iii) Any cell $j$ in the map within the sensor's field of view always has a measurement associated with it. Due to (iii), if no measurements are obtained for cell $j$ (lying within the sensor range), then it corresponds to free-space.
\end{assumption}




\section{Safe Semantic Planning in Unknown Environments
}\label{sec:method}


A natural approach to Problem \ref{prob:main} is to plan directly over the distribution of the map $\bbm_{t}$, as in~\cite{fu2016optimal,kantaros2022perception}. However, in practice, the  sensor models used to update the map distribution are inaccurate, so the PMFs $p_t(m_j)$ do not represent true probabilities. As a result, planning directly over $\bbm_{t}$ may compromise the probabilistic correctness of the generated paths. To overcome this limitation, we propose a new planner that makes no assumptions about the sensor model and noise characteristics. In Section \ref{sec:calibration}, we employ conformal prediction (CP) to ‘calibrate’ $\bbm_t$ by computing prediction sets for each cell $j\in\ccalJ_t$ that contain the true label with user-defined confidence. These sets are then used in Section \ref{sec:safePlan} to design paths $\bbx_{0:H}$ that solve Problem \ref{prob:main}.


\subsection{Conformalizing Semantic Maps}\label{sec:calibration}

As discussed earlier, the distribution of $\bbm_{t}$ does not represent true probabilities, compromising the correctness of paths designed only using $\bbm_{t}$. To address this, for any {test environment $\Omega_{\text{test}}\sim\ccalD$}, we aim to design sets $\ccalC_j(\bbm_t)\subseteq\ccalK$ of labels for each cell $j\in\ccalJ_t$ detected along a path $\bbx_{0:H}$ and time step $t\in\{0,\dots,H\}$ satisfying the following property:
\begin{align}\label{eq:predSetProperty}
\mathbb{P}_{{\Omega\sim\ccalD}}\left(C_{0:H}\mid\bbm_{0:H}\right)\geq 1-\alpha,
\end{align}
where $C_{0:H}=\bigwedge_{\forall j\in\ccalJ_t,t\in[0:H]}(k_j^{\text{gt}}\in\ccalC_j(\bbm_t))$ and $k_j^{\text{gt}}\in\ccalK$ is the true label of the region occupying cell $j$. 
In words, \eqref{eq:predSetProperty} guarantees the inclusion of the true labels $k_j^{\text{gt}}$ in the sets $\ccalC_j(\bbm_t)$,$\forall j\in\ccalJ_t$ in $\bbm_t$, and $\forall t\in\{0,\dots,H\}$, and any robot path $\bbx_{0:H}$ followed to construct $\bbm_{0:H}$.

To construct these sets for ${\Omega_{\text{test}}}$, we employ CP to quantify uncertainty in the outputs of the mapping algorithm $\sigma$. Applying CP requires (i) a set of calibration {environments} ${\{\Omega_i\}_{i=1}^D}\sim\ccalD$ with known ground-truth labels $k_j^{\text{gt}}$ for all $j\in\ccalJ$, 
and (ii) a non-conformity score (NCS) that captures the error of $\sigma$, which generates maps $\bbm_{0:H}$ along \textit{any} path $\bbx_{0:H}$. It is crucial that the NCS accounts for mapping errors along arbitrary robot paths, rather than a single fixed path, since otherwise \eqref{eq:predSetProperty} would hold only for that path. Thus, we adopt the following NCS, which is path (and, consequently, task)-agnostic  and captures the worst-case mapping error, associated with the calibration {environment $\Omega_i$}:
\begin{align}\label{equ:NCS}
s_i=\max_{\bbx_{0:H}\in\ccalP;j\in\ccalJ_t; t\in[0,H]}\Big(1-p_t(m_j=k_j^{\text{gt}})\Big),
\end{align}
where $\ccalP$ is the set of all possible finite paths in $\Omega_i$. In \eqref{equ:NCS}, the term $p_t(m_j=k_j^{\text{gt}})$ denotes the probability that cell $j$ has the ground-truth label according to the map $\bbm_t$. 
Consequently, $1-p_t(m_j=k_j^{\text{gt}})$ models the mapping error at cell $j$ after collecting measurements along a specific path $\bbx_{0:t}$. The maximum operator in \eqref{equ:NCS} computes the worst-case error, i.e., the largest error across all cells $j\in\ccalJ_t$, all time steps $t\in[0,H]$, and all paths $\bbx_{0:H}\in\ccalP$.\footnote{Computing \eqref{equ:NCS} exactly requires evaluating the maximum over an infinite number of paths, which is impractical. As in \cite{PwC}, in practice the NCS is approximated using a finite set of paths $\ccalP$; see Section~\ref{sec:setup} for details.}
We compute the NCSs $\{s_i\}_{i=1}^D$ for all calibration {environments} and determine their $\frac{\lceil(D+1)(1-\alpha)\rceil}{D}$-quantile, denoted by $\hat{s}$.

For an unseen test {environment $\Omega_{\text{test}}\sim\ccalD$} with unknown true labels for cells $j$, at each step $t$ we compute the following prediction set of labels $\forall j\in\ccalJ_t$ (see Alg. \ref{algo:uncertaintyMap}):
\begin{align}\label{equ:gridPredSet}
    \ccalC_j(\bbm_t)=\{k\in\ccalK|p_t(m_j=k)\geq 1-\hat{s}\},
\end{align}
collecting all labels $k\in\ccalK$ satisfying $p_t(m_j=k)\geq 1-\hat{s}$ where $p_t$ refers to the PMF computed by $\sigma$ [line \ref{algo:createPredSet}, Alg. \ref{algo:uncertaintyMap}].
In Section \ref{sec:guarantees}, we show that these prediction sets satisfy \eqref{eq:predSetProperty}.

\begin{algorithm}[t]
\footnotesize
\caption{Conformalization of Semantic Map}\label{algo:uncertaintyMap}
\begin{algorithmic}[1]
\State \textbf{Input}: Map $\bbm_t$, score quantile $\hat{s}$, 
semantic classes $\ccalK$
\For{$j\in\ccalJ_t$}
\State $\ccalC_j(\bbm_t)=\{k\in\ccalK|p_t(m_j=k)\geq 1-\hat{s}\}$\label{algo:createPredSet}
\EndFor
\Return $\{\ccalC_j(\bbm_t)\}_{j\in\ccalJ_t}$\label{algo:return}
\end{algorithmic}
\end{algorithm}


\subsection{Planning over Conformalized Semantic Maps}\label{sec:safePlan}


{Consider an environment $\Omega\sim\mathcal{D}$. Our goal is to compute a plan $\bbx_{0:H}$ {satisfying any user-defined mission specification $\psi$} with probability at least $1-\alpha$. This path will be constructed incrementally as the map $\bbm_t$ is updated online; see Alg. \ref{algo:planner}. First, we collect the calibration dataset {$\{\Omega_i\}_{i=1}^D$}, and compute $\hat{s}$ as described in Section \ref{sec:calibration} [lines \ref{algo:collectData}-\ref{algo:computeScore}, Alg. \ref{algo:planner}].
%
Suppose at time $t>0$, the robot is at state $\bbx_{t}$, collects measurements $\ccalZ_t$, and constructs the map $\bbm_t$ [lines \ref{algo:measure}-\ref{algo:mapping}, Alg. \ref{algo:planner}]. Using $\bbm_t$, the prediction sets $\ccalC_j(\bbm_t)$ are constructed for all cells $j\in\ccalJ_t$ [line \ref{algo:conformalize}, Alg. \ref{algo:planner}]. We now discuss how the next robot state $\bbx_{t+1}$ is computed using these sets [lines \ref{algo:plan}–\ref{algo:timeUpdate}, Alg. \ref{algo:planner}]. The next state $\bbx_{t+1}$ must be chosen so that it satisfies \eqref{equ:pathCorrectness}. Since the regions $\ccalL_k$ are unknown, satisfaction of \eqref{equ:pathCorrectness} must be checked using the map $\bbm_t$. Specifically, given $\bbm_t$, \eqref{equ:pathCorrectness} can be re-written as:
\begin{equation}\label{equ:newSafetyConstraint}
    \lVert\bbx_{t+1}|_\Omega-c_j\rVert \geq d_{k_j}+r_m,
\end{equation}
for all cells $j\in\ccalJ_t$, where $r_m$ is the grid resolution, $c_j\in\Omega$ is the center of cell $j$ and $k_j$ is the label assigned to cell $j$.\footnote{The grid resolution is added to ensure that the robot never gets close to an object even if the object occupies the entirety of the grid cell.} 
\textcolor{black}{Note that cells $j\in\ccalJ\setminus\ccalJ_t$ within the sensor range $r_{\text{max}}$ are free space due to Assumption~\ref{assume:sensors}. }
The next state $\bbx_{t+1}$ can be chosen using any existing planner if the true label $k_{j}^{\text{gt}}$ of each cell $j$ were known to the robot, which is not the case here. 
To address this, we leverage sets $\ccalC_j(\bbm_t)$, constructed as shown in \eqref{equ:gridPredSet}, that contain the true label with user-defined confidence. From each set, we select the most conservative label, i.e., $k_j=\argmax_{k\in\ccalC_j(\bbm_t)} d_k$, for every cell $j\in\ccalJ_t$. \textcolor{black}{All cells $j\in\ccalJ\setminus\ccalJ_t$ outside the sensor range are considered part of the free space i.e., $k_j=0$, for all $j\in\ccalJ\setminus\ccalJ_t$.} Given this assignment of labels, the planner adaptively switches between exploitation and exploration modes as follows. 

\textbf{Exploitation Phase:} Using an existing motion planner \cite{a-star,karaman2011sampling,janson2015fast}, we first check whether there exists a robot path $\bbp=\bbp_0,\bbp_1,\dots,\bbp_T$, with $\bbp_n\in\ccalX$ for all $n\in\{0,\dots,T\}$ and some $T\geq 0$ that completes $\psi$ under the assigned labels $k_j$ [line \ref{algo:plan}, Alg. \ref{algo:planner}]. The path must satisfy $\bbp_0=\bbx_t$, $\bbp_T\in\ccalX_g$, and $\lVert \bbp_{n}|_\Omega-c_j\rVert \geq d_{k_j}+r_m,\forall n$. If such a feasible plan exists, the robot selects $\bbx_{t+1}=\bbp_1$, moves towards it, updates the time step to $t+1$, and repeats the process [lines \ref{algo:assign}-\ref{algo:timeUpdate}, Alg. \ref{algo:planner}].

\textbf{Exploration Phase:} If no feasible solution is found during the exploitation phase, the planner switches to exploration [line \ref{algo:ExploreTrigger}, Algo. \ref{algo:planner}]. In this mode, any exploration algorithm can be employed to generate paths $\bbp$ that optimize information-theoretic objectives (e.g., mutual information) subject to the constraints in \eqref{equ:newSafetyConstraint}; see, e.g., \cite{ssmi_temp,explore}. Given such a path $\bbp$, the planner selects $\bbx_{t+1}=\bbp_1$, sends it to the robot, updates the time step to $t+1$, and repeats the process [lines \ref{algo:plan}-\ref{algo:timeUpdate}, Alg. \ref{algo:planner}]. The goal is for exploration paths to reduce uncertainty (i.e., prediction set sizes) for the robot to re-enter exploitation mode.

{During both exploration and exploitation we also ensure that $\lVert\bbx_{t+1}-\bbx_t\rVert<r_{\text{max}}$ to eliminate cases where the next state may lie in the unexplored part of the map. 
} The above process terminates once $\bbx_{t+1}\in\ccalX_g$, {and the robot reaches $\bbx_{t+1}$}. 
Concatenation of these states across time yields $\bbx_{0:H}$.

\begin{algorithm}[t]
\footnotesize
\caption{Safe Planning Over Conformalized Maps}\label{algo:planner}
\begin{algorithmic}[1]
\State \textbf{Input}: Prior map $\bbm_{-1}$; Task $\psi$; Initial state $\bbx_0$; Set of labels $\ccalK$
\State Initialize $t=0$
\State {Collect calibration environments $\{\Omega_i\}_{i=1}^D$}\label{algo:collectData}
\State {Compute NCSs $\{s_i\}_{i=1}^D$ and quantile $\hat{s}$}\label{algo:computeScore}
\While{$\mathbf{x}_t\not\in\ccalX_g$}
\State $\ccalZ_t=\texttt{Measure}(\Omega,\bbx_t)$\label{algo:measure}
\State $\bbm_t=\sigma(\Omega,\bbm_{t-1},\ccalZ_t)$\label{algo:mapping}
\State $\left\{\ccalC_j(\bbm_{t})\right\}_{j\in\ccalJ_t}=\texttt{Conformalize}(\bbm_{t},\hat{s},\ccalK)$ (see Alg. \ref{algo:uncertaintyMap})\label{algo:conformalize} 
\State $\bbp=\texttt{Plan}\left(\bbx_t,\ccalX_g,\{\ccalC_j(\bbm_{t})\}_{j\in\ccalJ_{t}}\right)$\label{algo:plan}
%
\If{$\bbp=\emptyset$}\label{algo:ExploreTrigger}
\State $\bbp=\texttt{Explore}\left(\bbx_t,\{\ccalC_j(\bbm_{t})\}_{j\in\ccalJ_{t}}\right)$\label{algo:explore}
\EndIf
\State $\bbx_{t+1}=\bbp(1)$\label{algo:assign}
\State $\texttt{Execute}(\bbx_{t+1})$\label{algo:execute}
\State $t=t+1$\label{algo:timeUpdate}
\EndWhile
\end{algorithmic}
\end{algorithm}

\section{Performance Guarantees}\label{sec:guarantees}

In this section, we show that our planner addresses Problem \ref{prob:main}, i.e., the path $\bbx_{0:H}$ generated by Algorithm \ref{algo:planner} completes $\psi$ with probability $1-\alpha$. 
This result builds upon the fact that the sets $\ccalC_j(\bbm_t)$, constructed by Alg.~\ref{algo:uncertaintyMap}, satisfy \eqref{eq:predSetProperty}, i.e., they jointly contain the true labels for all $j\in\ccalJ_t$, $t\in[0,H]$ and any path $\bbx_{0:H}$ with probability at least $1-\alpha$. 

\begin{proposition}[Mapping Guarantees]\label{prop:perception}
    Consider a test {environment $\Omega_{\text{test}}\sim\ccalD$} with unknown true labels $\{k_j^{\text{gt}}\}_{j\in\ccalJ}$. Given any path $\bbx_{0:H}$ in $\Omega_{\text{test}}$ yielding a sequence of maps $\bbm_{0:H}$, the prediction sets $\{\ccalC_j(\bbm_t)\}_{j\in\ccalJ_t}$, computed at every $t\in\{0,\ldots,H\}$ constructed by Alg. \ref{algo:uncertaintyMap} satisfy \eqref{eq:predSetProperty}, for a user-defined $\alpha\in(0,1)$.
\end{proposition}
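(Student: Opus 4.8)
The plan is to apply the standard split conformal prediction coverage guarantee, being careful about the path-agnostic construction of the non-conformity score. I would start by recalling the setup: the calibration scenarios $\{\xi_i\}_{i=1}^D$ and the test scenario $\xi_{\text{test}}$ are i.i.d.\ from $\ccalD$, hence (by construction of the NCS in \eqref{equ:NCS} as a deterministic function of a scenario together with the fixed mapping algorithm $\sigma$) the scores $s_1,\dots,s_D,s_{\text{test}}$ are exchangeable. Here $s_{\text{test}}=\max_{\bbx_{0:H}\in\ccalP_{\text{test}};\,j\in\ccalJ_t;\,t\in[0,H]}(1-p_t(m_j=k_j^{\text{gt}}))$ is the worst-case mapping error over \emph{all} finite paths in $\Omega_{\text{test}}$. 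The classical split-CP lemma then gives $\mathbb{P}(s_{\text{test}}\le\hat s)\ge 1-\alpha$, where $\hat s$ is the $\lceil(D+1)(1-\alpha)\rceil/D$-quantile of the calibration scores.

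Next I would show that the event $\{s_{\text{test}}\le\hat s\}$ is contained in the event $C_{0:H}$ appearing in \eqref{eq:predSetProperty}, for \emph{every} path $\bbx_{0:H}$ the robot might actually follow. Fix any such path, any $t\in[0,H]$, and any detected cell $j\in\ccalJ_t$. Since this path belongs to $\ccalP_{\text{test}}$, the definition of $s_{\text{test}}$ as a maximum yields $1-p_t(m_j=k_j^{\text{gt}})\le s_{\text{test}}\le\hat s$, i.e.\ $p_t(m_j=k_j^{\text{gt}})\ge 1-\hat s$. By the construction of the prediction set in \eqref{equ:gridPredSet} (equivalently line~\ref{algo:createPredSet} of Alg.~\ref{algo:uncertaintyMap}), this is exactly the membership condition $k_j^{\text{gt}}\in\ccalC_j(\bbm_t)$. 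Taking the conjunction over all $j\in\ccalJ_t$ and all $t\in[0,H]$ gives $C_{0:H}$, so $\{s_{\text{test}}\le\hat s\}\subseteq\{C_{0:H}\}$ and therefore $\mathbb{P}_{\xi\sim\ccalD}(C_{0:H}\mid\bbm_{0:H})\ge\mathbb{P}(s_{\text{test}}\le\hat s)\ge 1-\alpha$, which is \eqref{eq:predSetProperty}.

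I would close by emphasizing why the path-agnostic maximum in \eqref{equ:NCS} is essential: the coverage guarantee from split CP is a statement about the single random score $s_{\text{test}}$ computed \emph{before} seeing which path the planner selects; because that score already dominates the mapping error along every path, the resulting $\hat s$ simultaneously validates the prediction sets on whichever (data-dependent) path $\bbx_{0:H}$ Alg.~\ref{algo:planner} ends up executing. I would also note the mild technical point that $s_{\text{test}}$ must be well-defined (the supremum over the infinite path set $\ccalP_{\text{test}}$ is attained or at least finite); this is immediate since each term lies in $[0,1]$, and in practice $\ccalP$ is replaced by a finite subset as noted in the footnote, which only inflates $\hat s$ and hence preserves (conservatively) the guarantee.

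The main obstacle I anticipate is not the conformal machinery itself—that is the textbook argument—but making the exchangeability claim airtight: one must argue that $s_i$ depends on $\xi_i$ only through quantities that are themselves i.i.d., in particular that the prior map $\bbm_{-1}$, the mapping operator $\sigma$, the grid $\ccalJ$, and the ground-truth labels are either common across scenarios or are deterministic functions of $\xi_i$, so that $s_i=f(\xi_i)$ for a fixed measurable $f$ and the i.i.d.\ sampling of the $\xi_i$ transfers to the $s_i$. Once that is established, the inclusion of events and the quantile bound are routine.
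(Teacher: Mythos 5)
Your proposal is correct and takes essentially the same route as the paper's proof: the standard split-CP quantile argument gives $\mathbb{P}(s_{\text{test}}\leq\hat{s})\geq 1-\alpha$, and the path-agnostic maximum in \eqref{equ:NCS} lets you pass from that single event to $p_t(m_j=k_j^{\text{gt}})\geq 1-\hat{s}$ for every cell, time, and path, which by \eqref{equ:gridPredSet} is exactly $C_{0:H}$; your added care about exchangeability and about why the worst-case-over-paths score is needed is a welcome sharpening of what the paper leaves implicit. One genuine slip in your closing remark: replacing $\ccalP$ by a finite subset takes a maximum over \emph{fewer} paths, which can only \emph{decrease} the calibration scores and hence $\hat{s}$, shrinking the prediction sets — so the approximation is anti-conservative, not conservative as you claim; the paper's footnote accordingly presents it only as a practical approximation without asserting that the guarantee survives it.
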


\begin{proof}
Using a standard CP argument \cite{PwC,angelopoulos2023conformal}, we have:
\begin{equation}\label{eq:CP1}
\mathbb{P}_{{\Omega_{\text{test}}\sim\ccalD}}\left(s_{\text{test}}\leq\hat{s}\right|\bbm_{0:H})\geq1-\alpha,
\end{equation}
where $s_{\text{test}}$ is the NCS associated with {$\Omega_{\text{test}}$}. Note that $s_{\text{test}}$ is unknown as the ground truth labels $k_j^{\text{gt}}$ for cells $j$ are unknown. By definition of the NCS in \eqref{equ:NCS}, we have that $s_{\text{test}}=\max_{\bbx_{0:H}\in\ccalP,j\in\ccalJ_t,t\in[0:H]}1-p_t(m_j=k_j^{gt})$. Plugging this into \eqref{eq:CP1} yields the following result:
\begin{align*} 
\mathbb{P}_{{\Omega_{\text{test}}\sim\ccalD}}&\left(\max_{\bbx_{0:H}\in\ccalP,j\in\ccalJ_t,t\in[0:H]}(1-p_t(m_j=k_j^{gt})\leq\hat{s} \mid \bbm_{0:H})\right)\\&\geq1-\alpha.
\end{align*}
Thus, for any path $\bbx_{0:H}\in\ccalP$, we have that 
\begin{align}\label{eq:CP2}
\mathbb{P}_{{\Omega_{\text{test}}\sim\ccalD}}&\biggl(\bigwedge_{\forall j\in\ccalJ_t,\forall t\in[0:H],}p_t(m_j=k_j^{gt})\geq1-\hat{s} \mid \bbm_{0:H}\biggr)\nonumber\\
&\geq1-\alpha.
\end{align}
Recall from \eqref{equ:gridPredSet} that the prediction sets $\ccalC_j(\bbm_t)$ collect all labels $k\in\ccalK$ satisfying $p_t(m_j=k)\geq 1-\hat{s}$. Combining this with \eqref{eq:CP2} results in \eqref{eq:predSetProperty}, i.e., $\mathbb{P}_{{\Omega_{\text{test}}\sim\ccalD}}\left(C_{0:H}|\bbm_{0:H}\right)\geq1-\alpha$, where $C_{0:H}=\bigwedge_{\forall j\in\ccalJ_t,t\in[0:H]}(k_j^{\text{gt}}\in\ccalC_j(\bbm_t))$.
\end{proof}

{Observe that the result in Proposition~\ref{prop:perception} does not rely on Assumption~\ref{assume:sensors}, as it focuses only on map cells $j \in \mathcal{J}_t$ for which at least one measurement has been taken.}
%
Building on Proposition \ref{prop:perception}, we show that the path $\bbx_{0:H}$ generated by Alg.~\ref{algo:planner} addresses Problem \ref{prob:main}. To show this result, we make the following assumptions about the exploration and planning algorithms employed in Alg.~\ref{algo:planner}.

\begin{assumption}\label{assume:exploreFinite}
(i) The exploration algorithm is invoked a finite number of times before a path $\bbx_{0:H}$ is computed. Each invocation reduces mapping uncertaity-and consequently prediction set size-so that the planner switches back to exploitation mode. 
%
(ii) The planning algorithm used in the exploitation phase is complete, i.e., if a feasible path $\bbp$ exists that completes $\psi$ under the assigned labels described in Section \ref{sec:safePlan}, then the planner is guaranteed to find it. 
{These assumptions ensure} that mission failures can only be attributed to mapping errors. 
\end{assumption}

\begin{theorem}[Planning Guarantees]\label{thm:theorem}
    {Consider a test environment $\Omega_{\text{test}}\sim\ccalD$ with unknown true labels $\{k_j^{\text{gt}}\}_{j\in\ccalJ}$, and a mission specification $\psi_{\text{test}}$. Under Assumptions \ref{assume:sensors} and \ref{assume:exploreFinite}, Alg. \ref{algo:planner} will generate a path $\bbx_{0:H}$ satisfying \eqref{eq:pr}.}
\end{theorem}

\begin{proof}
%
First, by Assumption \ref{assume:exploreFinite}, in any {environment $\Omega_{\text{test}} \sim \ccalD$, and for any task $\psi_{\text{test}}$}, Algorithm \ref{algo:planner} is guaranteed to terminate and return a path $\bbx_{0:H}$. Consequently, any failure to complete $\psi_{\text{test}}$ can be attributed only to mapping errors. 
Second, consider the path $\bbx_{0:H}$ returned by Algorithm \ref{algo:planner} for {the task $\psi_{\text{test}}$, when operating in $\Omega_{\text{test}}\sim\ccalD$}. We show that $\bbx_{0:H}$ satisfies \eqref{eq:pr} by leveraging Proposition \ref{prop:perception}. Specifically, by construction of the exploration and exploitation modes in Algorithm \ref{algo:planner}, Assumption \ref{assume:exploreFinite}-(ii), and  Assumption \ref{assume:sensors} we have that: (i) all states $\bbx_t$ in $\bbx_{0:H}$ satisfy the imposed safety constraints given the labels $k_j=\argmax_{k\in\ccalC_j(\bbm_t)} d_k$ assigned to cells $j$, i.e., $\max_{k\in\ccalC_j(\bbm_t)} d_k+r_m\leq\lVert\bbx_{t+1}|_\Omega-c_j\rVert,\forall j\in\ccalJ_t$ and (ii) $\bbx_H\in\ccalX_g$. Now, let us assume that the ground truth label is included in the prediction sets for all $t\in\{0,\ldots,H\}$. This equivalently means that the probability that $\bbx_{0:H}$ satisfies $\psi$ with respect to the assigned labels, and implicitly the ground truth labels, is $1$. The implicit satisfaction with respect to the ground truth is due to the consideration of the worst-case label in the prediction set that also contains the ground truth. We express this as follows: 
\begin{align}\label{equ:pathSatisfy}
\mathbb{P}_{{\Omega_{\text{test}}\sim\ccalD}}\left(\bbx_{0:H}\models\psi|C_{0:H},\bbm_{0:H}\right)=1,
\end{align}where $C_{0:H}=\bigwedge_{\forall j\in\ccalJ_t,t\in[0:H]}\left(k_j^{gt}\in\ccalC_j(\bbm_t)\right)$. 
Recall from Proposition \ref{prop:perception}, that $\mathbb{P}_{{\Omega_{\text{test}}\sim\ccalD}}\left(C_{0:H}|\bbm_{0:H}\right)\geq1-\alpha$. Combining this with \eqref{equ:pathSatisfy} we get that $\bbx_{0:H}$ satisfies \eqref{eq:pr}. \end{proof} 

\begin{remark}[Dataset-conditional guarantees]\label{rem:datasetConditional}
The guarantees in Proposition \ref{prop:perception} and Theorem \ref{thm:theorem} are marginal, where the probabilities are over the randomness of the calibration set and the test environment. A dataset-conditional guarantee which holds for a fixed calibration set across environments can also be applied \cite{vovk2012conditional}. In such cases, Proposition \ref{prop:perception} is modified such that with a probability of at least $1-\delta$ over the sampling of the calibration set{$\{\Omega_i\}_{i=1}^D\sim\ccalD$}, 
    $\mathbb{P}_{{\Omega_{\text{test}}\sim\ccalD}}\left(\ccalC_{0:H}|\bbm_{0:H},{\{\Omega_i\}_{i=1}^D}\right)\geq Beta^{-1}_{D+1-v,v}(\delta),$ where $Beta^{-1}_{D+1-v,v}(\delta)$ is the $\delta$-quantile of the Beta distribution with parameters, $D$ - the size of the calibration dataset - and $v=\lfloor(D+1)\hat{\alpha}\rfloor$ and $\hat{\alpha}$ is selected so that $Beta^{-1}_{D+1-v,v}(\delta)\approx 1-\alpha$. 
    Similarly, Theorem \ref{thm:theorem} is modified, where with a probability of at least $1-\delta$ over the sampling of the calibration dataset, $\mathbb{P}_{{\Omega_{\text{test}}\sim\ccalD}}\left(\bbx_{0:H}\models\psi|\bbm_{0:H},{\{\Omega_i\}_{i=1}^D}\right)\geq Beta^{-1}_{D+1-v,v}(\delta)$.
\end{remark}

\section{Experimental Validation}\label{sec:sims}
In this section, we empirically validate the proposed approach. Sections \ref{sec:setup} introduces the setup of our experiments. 
%
The empirical performance of our method is presented in {Sections \ref{sec:results}-\ref{sec:NumPaths}}, while Section \ref{sec:compare} provides comparative results against uncertainty-informed and uncertainty-agnostic planners. 
Finally, Sections {\ref{sec:hardware}}-\ref{sec:ood} evaluate our algorithm in out-of-distribution (OOD) {real-world} and simulation scenarios, highlighting potential failure cases.
{Demonstrations in simulation and real-world are provided in \cite{int_com_video}.} 

\vspace{-0.25cm}
\subsection{Setting Up Experiments and the Proposed Planner}\label{sec:setup}
\textbf{Robot:} Our experiments are conducted using ROS~2 Humble \cite{ros2} and the Gazebo (Ignition) simulator. The simulated platform is a Clearpath Husky skid-steer ground vehicle with a maximum linear speed of $1.0~\text{m/s}$ and a maximum yaw rate of $0.5~\text{rad/s}$. It is equipped with a forward-facing RGB-D camera with a resolution of $640\times480$, a horizontal FoV of $90^\circ$, a vertical FoV of $73.7^\circ$, and a sensing range of $r_{\max}=10$~m. Note that this sensor is not omnidirectional, as required in Assumption~\ref{assume:sensors}, due to its FoV; consequently, our algorithm is tested in a more challenging setting. 



\begin{figure}[t]
\centering
\includegraphics[width=0.9\linewidth]{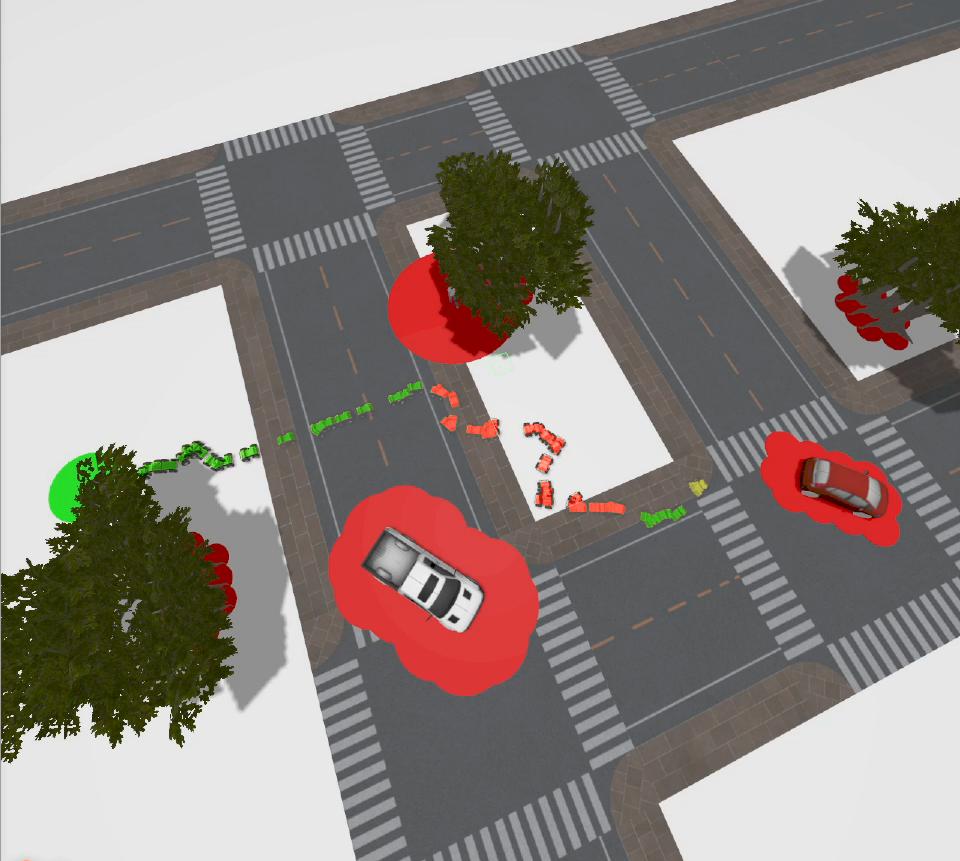}
\vspace{-0.3cm}
\caption{Illustration of a path $\bbx_{0:H}$ to the goal region (green disk) consisting of an exploration sub-path (red) and two exploitation sub-paths (green). The red regions model the unsafe regions as per \eqref{equ:pathSatisfy}.}
\vspace{-0.5cm}
\label{fig:environment}
\end{figure}

\textbf{Distribution $\ccalD$:} The distribution $\ccalD$ is designed to generate {environments} as follows. All environments $\Omega$ have dimensions of $70 \times 25$~m containing objects from five semantic classes $\mathcal{K} = \{\texttt{person}, \texttt{car}, \texttt{truck}, \texttt{tree}, \texttt{free}\}$.
Each environment is populated with one car, one truck, one human placed at randomized positions and orientations, with the human biased toward being placed near trees, and three parallel rows of three trees  in fixed locations.
The class-based safety constraints are: $d_{\text{person}}=4m$, $d_{\text{car}}=1m$, $d_{\text{truck}}=2m$, and $d_{\text{tree}}=0.5m$.
This choice enables testing our algorithm in cases of partial occlusions.
%
%
An example environment is shown in Figure~\ref{fig:environment}.

\textbf{Semantic Mapping:} To set up the semantic mapping algorithm introduced in \cite{ssmi_temp}, we use a Detectron2 model \cite{detectron2}, trained on the COCO dataset \cite{cocodataset}, to semantically segment RGB images. The segmented outputs are fused over time to construct a 3D semantic octomap with resolution $r_m=1$~m. In our experiments, since we focus on a ground robot, we project this 3D map onto the X–Y plane, while considering height ranges relevant to the robot; 
this gives rise to a 2D semantic class–based grid map. In doing so, we assume that no semantic objects are stacked vertically. 

\textbf{Exploitation-Exploration:} The exploitation mode of Alg.~\ref{algo:planner} is implemented using a standard $\texttt{A}^*$ search algorithm on the 2D grid map \cite{a-star}. For exploration, we design an algorithm that randomly selects a next state $\bbx_{t+1}$ near a voxel with a non-singleton prediction set subject to \eqref{equ:newSafetyConstraint}, aiming to decrease its prediction set size. 
As discussed in Section~\ref{sec:safePlan}, however, any other exploration algorithm can be substituted.\footnote{{A potential approach to handle cases in which the exploration algorithm fails to sufficiently reduce environmental uncertainty within a reasonable time is to find the largest $1-\alpha' < 1-\alpha$ for which a feasible path exists. This reduces the success rate to $1-\alpha'$ but allows the proposed method to generate a best-effort path.}}


\textbf{Conformalization of Maps:} Next, we discuss our CP implementation described in Section \ref{sec:calibration}. Recall that the NCS in \eqref{equ:NCS} requires evaluating the maximum over the set $\ccalP$, which contains an infinite number of paths. Since this is impractical, following \cite{PwC}, we approximate the NCS using a finite subset $\hat{\mathcal{P}}\subset\ccalP$ consisting of 10 paths {between start and goal locations that are randomly selected in the calibration environment $\Omega_i\sim\ccalD$. At test time, given $\Omega_{\text{test}}\sim\ccalD$, we randomly select the start and goal positions for the task $\psi$}. Also, to avoid generating a new calibration dataset for every $\Omega_{\text{test}}$, we apply CP as described in Remark \ref{rem:datasetConditional}, with $\delta=0.1$. 
Our calibration dataset consists of $50$ environments. 

\textbf{Evaluation Metrics:} We consider four metrics. (i) \textit{Mapping success rate}, defined as {the percentage of cases, where the following binary variable is $1$:} $S.R_{\text{map}}:=\mathds{1}(C_{0:H}),$ where $C_{0:H} = \bigwedge_{\forall j\in\ccalJ_t, t\in[0:H]} \left(k_j^{\text{gt}} \in \ccalC_j(\bbm_t)\right)$ and $k_j^{\text{gt}}$ is the ground truth label of cell $j$. In words, for a given test {environment $\Omega_{\text{test}}$}, $S.R_{\text{map}} = 1$ if all constructed prediction sets $\ccalC_j(\bbm_t)$ contain the true labels $k_j^{\text{gt}}$ for all $j \in \ccalJ_t$ and $t \in [0:H]$. Due to Proposition \ref{prop:perception}, the average mapping success rate of Alg. \ref{algo:planner} across test environments drawn from $\ccalD$ is expected to be at least $1-\alpha$ {even in the presence of a sensor with limited FoV}. (ii) \textit{Mission success rate}, defined as {the percentage of cases, where the  binary variable} $S.R_{\text{mission}}:=\mathds{1}\left(\bbx_{0:H}\models\psi\right)$ is $1$. 
In words, given a test {environment $\Omega_{\text{test}}$, and task $\psi_{\text{test}}$}, $S.R_{\text{mission}} = 1$ if the robot path $\bbx_{0:H}$ satisfies the safety constraints in \eqref{equ:pathSatisfy} and $\bbx_H\in\ccalX_g$. {Note that if the sensor was omnidirectional, the mission success rate of Alg. \ref{algo:planner} should have been to be at least $1-\alpha$ due to Theorem \ref{thm:theorem}; see our discussion in Section \ref{sec:results}.} 
(iii) \textit{Distance} traveled by the robot during deployment, which captures efficiency of the generated paths. (iv) \textit{Proportion of the overall path used for exploration}, {denoted as $p_{\text{expl}}$}, quantifying how much of the trajectory was dedicated to reducing uncertainty in the environment. {We report the average of these four metrics over 61 test environments. We also report the standard deviation of the binary variables $S.R_{\text{map}}$ and $S.R_{\text{mission}}$.
} 

\subsection{Empirical Evaluation of the Proposed Planner}\label{sec:results}

We evaluate our proposed method for three user-defined success rates: $1-\alpha \in\{ 95\%, 90\%, 85\%\}$ over $61$ environments drawn from $\ccalD$. 
%
The empirical results are reported in Table \ref{tab:empirical}.
{Observe that the empirical mapping success rates validate Prop. \ref{prop:perception}.}
%
It can be seen that for $1-\alpha=95\%$, the empirical mapping success rate is $93.36\%<95\%$. Such small deviations are due to the finiteness of the calibration dataset, the number of paths that are considered for calibration, and the number of validation environments; see \cite{CPbase} for more information on the influence of finiteness on the empirical success rates. 
We also observe that the empirical mission success rates validate Theorem \ref{thm:theorem}.
{
We note that we did not observe any mission failures due to limited FoV. 
\footnote{{A heuristic approach to prevent mission failures due to limited FoV is to have the robot spin at time $t$ to collect measurements, or to restrict motions into unknown areas lying within its blind spots.}}}
%
The path length increases with higher values of $1-\alpha$. This is expected, as higher $1-\alpha$ typically results in non-singleton prediction sets, which produce a more conservative assignment of labels to cells, as discussed in Section \ref{sec:safePlan}. Consequently, our planner generates longer paths to account for the most conservative labels. Similarly, since higher $1-\alpha$ often leads to non-singleton prediction sets, the proportion of the overall path spent on exploration is also higher.

\begin{table}[t]
    \centering
    \caption{Comparative Evaluations}
    \begin{tabular}{cccccc}
        \toprule
        Planner & \makecell{$1-\alpha$ \\ (\%)} & \makecell{$S.R_{\text{map}}$ \\ (\%)} &
        \makecell{$S.R_{\text{mission}}$ \\ (\%)} &
        \makecell{Path\\length \\ ($m$)} &
        \makecell{$p_{\text{expl}}$ \\ (\%)} \\
        \midrule
        & 95 & 93.36 $\pm$ 3.19 & 98.36 $\pm$ 1.63 & 74.7 & 19\\
        Ours & 90 & 90.16 $\pm$ 3.81 & 96.72 $\pm$ 2.28 & 71.7 & 18.54\\
        & 85 & 86.89 $\pm$ 4.32 & 95.08 $\pm$ 2.77 & 69.72 & 11.2\\
        \midrule
        & 95 & 58.33 & 93.33 & 69.66 & 8.17 \\
        \makecell{UI}& 90 & 41.67 & 76.67 & 66.14 & 4.65 \\
        & 85 & 35 & 75 & 62.53 & 5.04 \\
        \midrule
        \makecell{UA} & NA & 10.17 & 65.57 & 48.91 & NA \\
        \bottomrule
    \end{tabular}
    \label{tab:empirical}
\end{table}

\subsection{Effect of Number $|\hat{\ccalP}|$ of Calibration Plans} \label{sec:NumPaths}

{As mentioned in Section \ref{sec:setup}, we use a finite set $\hat{\ccalP}\subset\ccalP$ to estimate the NCS in $\eqref{equ:NCS}$. In what follows, we evaluate the performance of our algorithm for different sizes of $\hat{\ccalP}$. Specifically, we use the same $50$ calibration environments and the same $61$ test environments described in Section \ref{sec:setup} for $|\hat{\ccalP}| = 1, 5, 10,$ and $20$. 
Notably, the empirical mapping success rates shown in Table \ref{tab:size of calibration paths} remain close to $1 - \alpha$ for all considered $|\hat{\ccalP}|$, while the standard deviation of $S.R_{\text{map}}$ tends to decrease, as $|\hat{\ccalP}|$ increases. We emphasize that this observation may be case-specific; in theory, the mapping success rate is guaranteed to be at least $1 - \alpha$ only when $\hat{\ccalP} = \ccalP$. As $|\hat{\ccalP}|$ increases, the NCS becomes larger, resulting in larger, non-singleton prediction sets. This increased conservativeness is reflected in higher (though not necessarily monotonic) mission completion rates, longer paths, and larger exploration proportion.  
}

\begin{table}[t]
    \centering
    \caption{Effect of $|\hat{\ccalP}|$}
    \begin{tabular}{cccccc}
        \toprule
        \makecell{$1-\alpha$ \\ (\%)} & \makecell{$n$} & \makecell{$S.R_{\text{map}}$ \\ (\%)} & \makecell{$S.R_{\text{mission}}$ \\ (\%)} &
        \makecell{Path\\length \\ ($m$)} &
        \makecell{$p_{\text{expl}}$ \\ (\%)} \\
        \midrule
        & 1 & 90.33 $\pm$ 6.04 & 96.56 $\pm$ 2.34 & 63.727 & 5.59\\
        & 5 & 87.66 $\pm$ 4.12 & 89.22 $\pm$ 3.94 & 75.206 & 13.59\\
        90 & 10 & 90.16 $\pm$ 3.81 & 96.72 $\pm$ 2.28 & 71.7 & 18.54\\
        & 20 & 90 $\pm$ 3.79 & 100 $\pm$ 0 & 76.7338 & 18.81\\
        \bottomrule
    \end{tabular}
    \label{tab:size of calibration paths}
\end{table}

\begin{figure}[t]
\centering
\includegraphics[width=\linewidth]{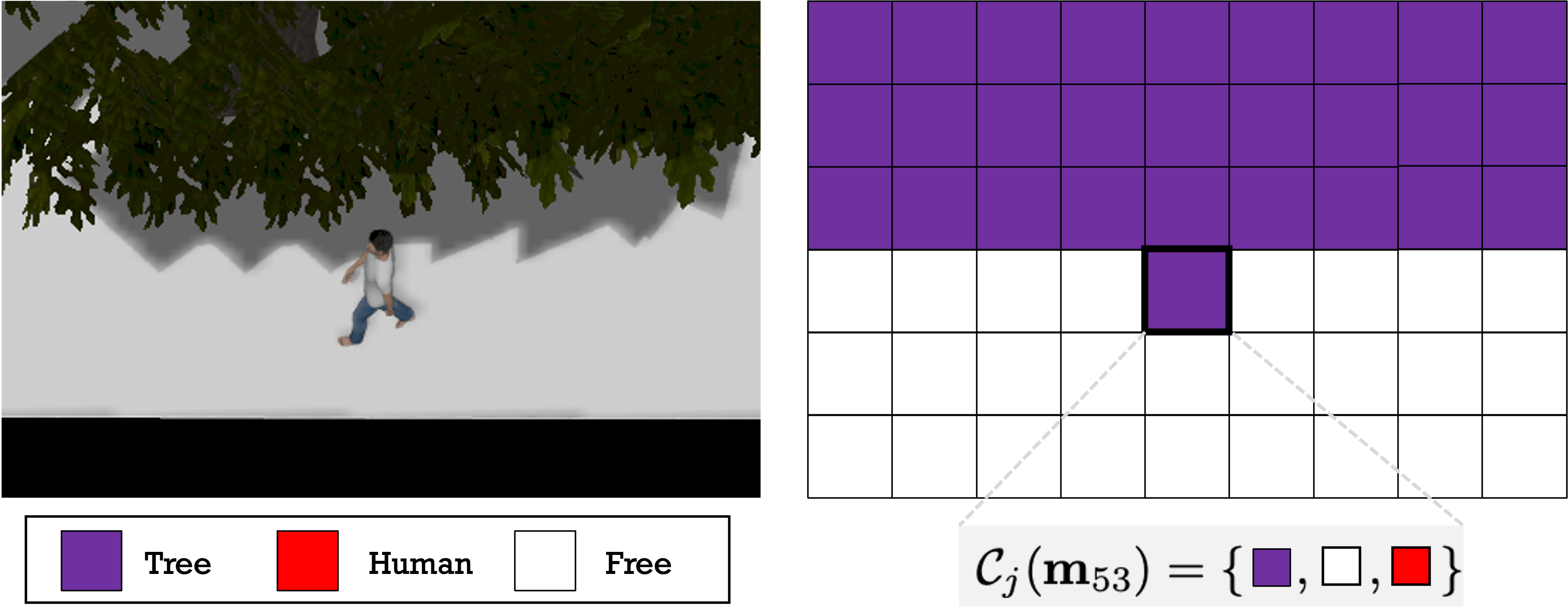}
\vspace{-0.5cm}
\caption{Top view of the environment (left) and the corresponding map $\bbm_t$ (right). The most likely label in $\bbm_t$ for cell $j$, occupied by a human, is `tree’. In contrast, the prediction set constructed by our method for that cell mitigates this mapping error by including `tree’, `human’, and `free-space’. 
}
\vspace{-0.4cm}
\label{fig:occlusion}
\end{figure}

\subsection{Comparative Experiments}\label{sec:compare}

\textbf{Baselines:} We compare our planner against the following baselines. (i) \textit{Uncertainty-agnostic (UA) Planner:} This planner assigns to each cell $j$ the most-likely label as per the PMFs $p_t(m_j)$, i.e., the label assigned to cell $j$ at time $t$ is $k_j=\argmax_{k\in\ccalK}p_t(m_j)$. Using this label assignment, the planner employs A* to compute paths that complete $\psi$. We select this baseline to illustrate the importance of designing planners that are aware of mapping uncertainty. (ii) \textit{Uncertainty-informed (UI) Planner:} The second baseline differs from our planner only in the construction of prediction sets. Specifically, the predictions sets are built heuristically by treating  the PMFs $p_t(m_j)$ as true probabilities. Thus, the prediction sets are constructed as $ \ccalC_j(\bbm_t)=\{\pi_1,\ldots,\pi_k\},$
where $k=\sup\left\{k':\sum_{i=1}^{k'}p_t(m_j=\pi_i)<1-\alpha\right\}+1$, and $\pi$ is the permutation of $\ccalK$ that sorts the classes from most likely to least likely, i.e. $p_t(m_j=\pi_{n-1})>p_t(m_j=\pi_n)$. We select this baseline to illustrate the importance of calibrating uncertainty metrics provided by mapping algorithms.

\textbf{Comparative Results:} 
Table \ref{tab:empirical} shows the performance of both baselines using the evaluation metrics discussed in Section \ref{sec:setup}. To ensure a fair comparison, both baselines are evaluated on the same $61$ test {environments, and their corresponding reach-avoid tasks} used for our proposed method. Notice that the UI planner outperforms the UA planner in terms of mission and mapping success rates, as it accounts for mapping uncertainty. However, our method outperforms the UI baseline because the UI prediction sets—unlike ours—are constructed heuristically. 

As seen in Table \ref{tab:empirical}, our approach leads to a significant improvement in mapping and mission success rates. Our empirical evaluation shows that most baseline failures occur due to partial occlusions in the environment. An example is illustrated in Figure \ref{fig:occlusion}, where a human is partially occluded by trees. The cell associated with the human is often labeled as tree' by the UA planner, leading to safety violations because the human' label has stricter safety constraints than the tree' label. In this case, by the time the UA planner detects the human, a safety constraint has already been violated. Similarly, the UI planner builds a singleton prediction set containing only the `tree' label. In contrast, our approach constructs a prediction set that includes `tree', `human', and `free-space', enabling the planner to follow a conservative path and maintain safety. This results in higher mission and mapping success rates, though at the cost of longer paths, as shown in Table \ref{tab:empirical}.





\begin{figure}[t]
\centering
\includegraphics[trim={0 0 0 0},clip,width=0.85\linewidth]{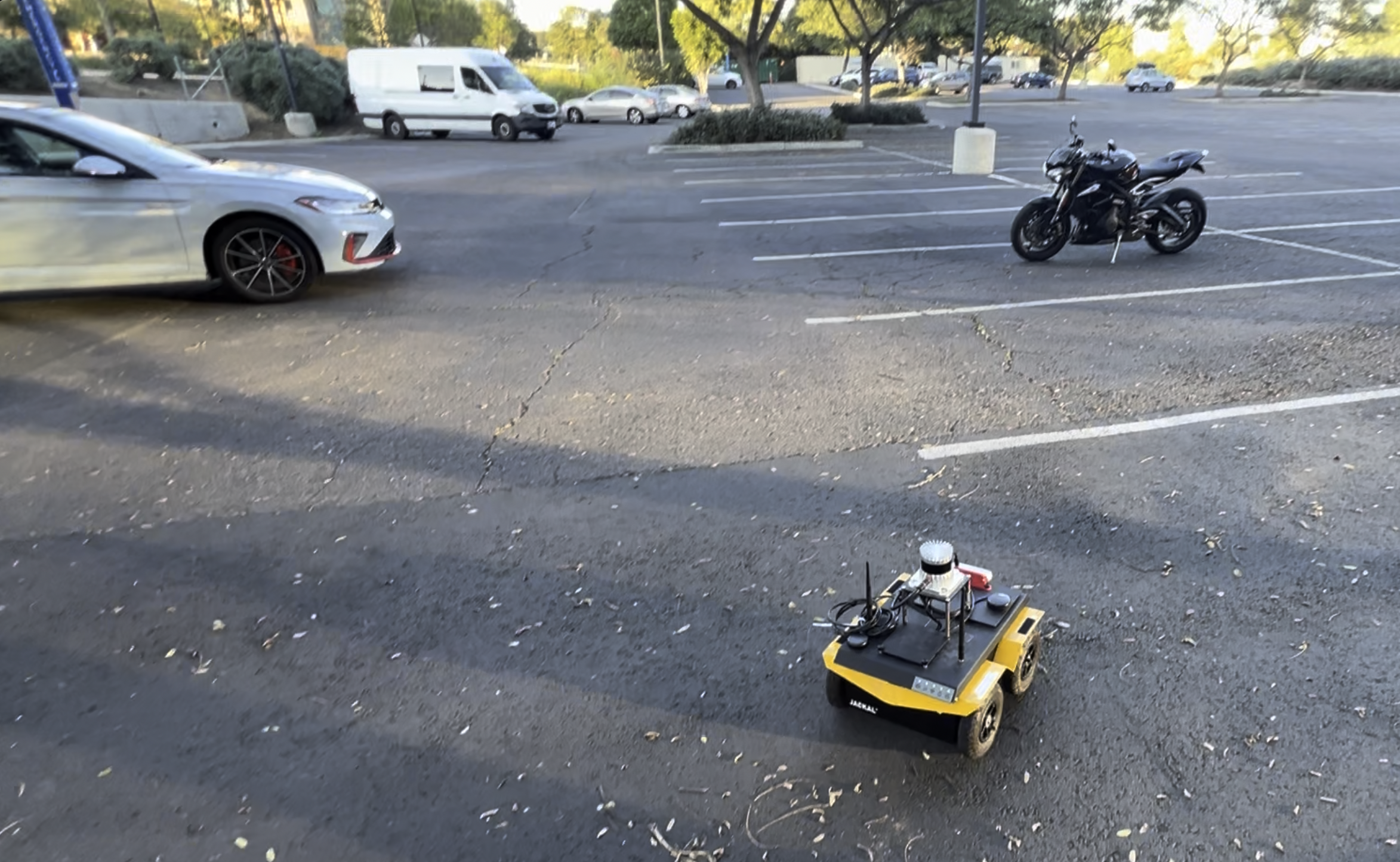}
\vspace{-0.3cm}
\caption{{Hardware demonstration of the proposed method.} 
}\vspace{-0.6cm}
\label{fig:realworld}
\end{figure}


\subsection{Hardware Experiments}\label{sec:hardware}

{In this section, we demonstrate robot behaviors with respect to various values for $1-\alpha$ in real-world settings. Specifically, we consider a Clearpath Jackal equipped with an Intel RealSense D455 camera and a Detectron2 segmentation model \cite{detectron2}. The robot operates in environments with humans, motorcycles, cars, and tree (see  Fig.~\ref{fig:realworld}). The class-based safety constraints are: $d_{\text{person}}=1.5m$, $d_{\text{car}}=1.25m$, $d_{\text{motorcycle}}=1m$, and $d_{\text{tree}}=1m$. Similar to the simulation results, we observed an increasing trend in path length as $1-\alpha$ increases. Specifically, the total traveled distance for $1-\alpha = 90\%, 95\%, \text{and } 99\%$ was $24.38$~m, $24.92$~m, and $25.78$~m, respectively. We note that our method was applied using calibration data described in Section~\ref{sec:setup}, which induces a distribution shift. Evaluation of our approach in OOD settings in terms of mapping and mission success rates is provided in Section~\ref{sec:ood}. }

\subsection{Evaluation in Out-of-Distribution (OOD) Scenarios }\label{sec:ood}
{Finally, we evaluated our method in OOD environments. For calibration, we use the same $50$ calibration environments sampled from $\ccalD$ mentioned in Section \ref{sec:setup}, with $|\hat{\ccalP}|=10$. In what follows, we set $1-\alpha=90\%$. We consider the case where the test environments are sampled from a distribution $\ccalD'$ where, unlike $\ccalD$, the trees vary in number from $1$ to $10$, and are placed randomly. In addition, the environments also simulated the lighting conditions of dusk and dawn instead of midday used in $\ccalD$. We observed a drop in mapping success rates for $1-\alpha=90\%$ from $90.16\%\pm3.81\%$ to $80.33\%\pm5.09\%$. The mission success rates also dropped from $96.72\%\pm2.28\%$ to $93.44\%\pm3.17\%$, albeit still being greater than $1-\alpha$. We empirically observed that changes in the number and spatial distribution of the trees have a more significant impact than the lightning conditions on the performance of the algorithm.}

\section{Conclusions and Future Work}\label{sec:conc}
In this paper, we presented a novel semantic planner for unknown environments. Unlike related works, our method designs probabilistically correct paths in the presence of environmental uncertainty, stemming from imperfect perception, without any assumptions on sensor models or noise characteristics. 
Future work will focus on: (i) designing active exploration algorithms that directly target reducing the size of prediction sets; (ii) accounting for OOD scenarios by leveraging robust CP methods; and (iii) directly quantifying sensor (rather than mapping) uncertainty. 



\bibliographystyle{IEEEtran}
\bibliography{YK.bib, sources.bib}
\end{document}